\newtheorem{theorem}{Theorem}
\newtheorem{proposition}[theorem]{Proposition}
\newtheorem{measure}{Measure}
\DeclareMathOperator*{\argmax}{arg\,max}
\newcommand*\rel@kern[1]{\kern#1\dimexpr\macc@kerna}
\newcommand*\widebar[1]{%
  \begingroup
  \def\mathaccent##1##2{%
    \rel@kern{0.8}%
    \overline{\rel@kern{-0.8}\macc@nucleus\rel@kern{0.2}}%
    \rel@kern{-0.2}%
  }%
  \macc@depth\@ne
  \let\math@bgroup\@empty \let\math@egroup\macc@set@skewchar
  \mathsurround\z@ \frozen@everymath{\mathgroup\macc@group\relax}%
  \macc@set@skewchar\relax
  \let\mathaccentV\macc@nested@a
  \macc@nested@a\relax111{#1}%
  \endgroup
}
\begin{document}

% TITLE ----------------------------
\title{Radar-only ego-motion estimation in difficult settings via graph matching}
\author{Sarah H. Cen and Paul Newman
\vspace{-1em}\thanks{ S. H. Cen is with the Department of Electrical Engineering and Computer Science, Massachussetts Institute of Technology, Cambridge 02139, USA (shcen@mit.edu). P. Newman is with the Oxford Robotics Institute, University of Oxford, Oxford OX1 3PJ, UK (pnewman@robots.ox.ac.uk).}}

\maketitle

\begin{abstract} 
Radar detects stable, long-range objects under variable weather and lighting conditions, making it a reliable and versatile sensor well suited for ego-motion estimation. In this work, we propose a radar-only odometry pipeline that is highly robust to radar artifacts (e.g., speckle noise and false positives) and requires only one input parameter. We demonstrate its ability to adapt across diverse settings, from urban UK to off-road Iceland, achieving a scan matching accuracy of approximately 5.20 cm and 0.0929 deg when using GPS as ground truth (compared to visual odometry's 5.77 cm and 0.1032 deg). We present algorithms for keypoint extraction and data association, framing the latter as a graph matching optimization problem, and provide an in-depth system analysis. 
\end{abstract}
\vspace{-0.1em}

% INTRO ----------------------------
\section{Introduction}

Despite the rapid expansion and advancement of research into sensor systems for mobile autonomy, those used for intelligent transportation typically feature the same suite: vision, lidar, GPS, and proprioceptive sensors. Recent interest in radar for mobile autonomy \cite{adams2012robotic,cen2018mmwradar} has uncovered its potential as a highly robust, multipurpose sensor. Unlike vision and lidar systems, it is resilient to variable lighting and weather, and, unlike GPS, it functions both indoors and outdoors. Radar is reliable at short and long ranges, and it is becoming increasingly more compact and affordable. In addition to its practical benefits, radar returns are simultaneously data-efficient and information-rich: radar can observe multiple objects per transmission and detect their locations, velocities, and cross-section characteristics. 

As a result, radar is a highly versatile sensor that can withstand adverse conditions. Its ability to % consistently 
detect stable, long-range features makes it particularly suitable for odometry and localization. In this paper, we build on our previous work \cite{cen2018mmwradar}, in which we demonstrate precise ego-motion estimation using only a  millimeter-wave (MMW) frequency-modulated continuous-wave (FMCW) scanning radar.  As shown in \cite{cen2018mmwradar}, our system performs comparably to state-of-the-art visual odometry (VO) and GPS. Even under conditions that cause other sensors to fail, radar odometry (RO) is robust.

\setlength{\textfloatsep}{1em}
\begin{figure}[t]
    \centering
    \includegraphics[width=\columnwidth]{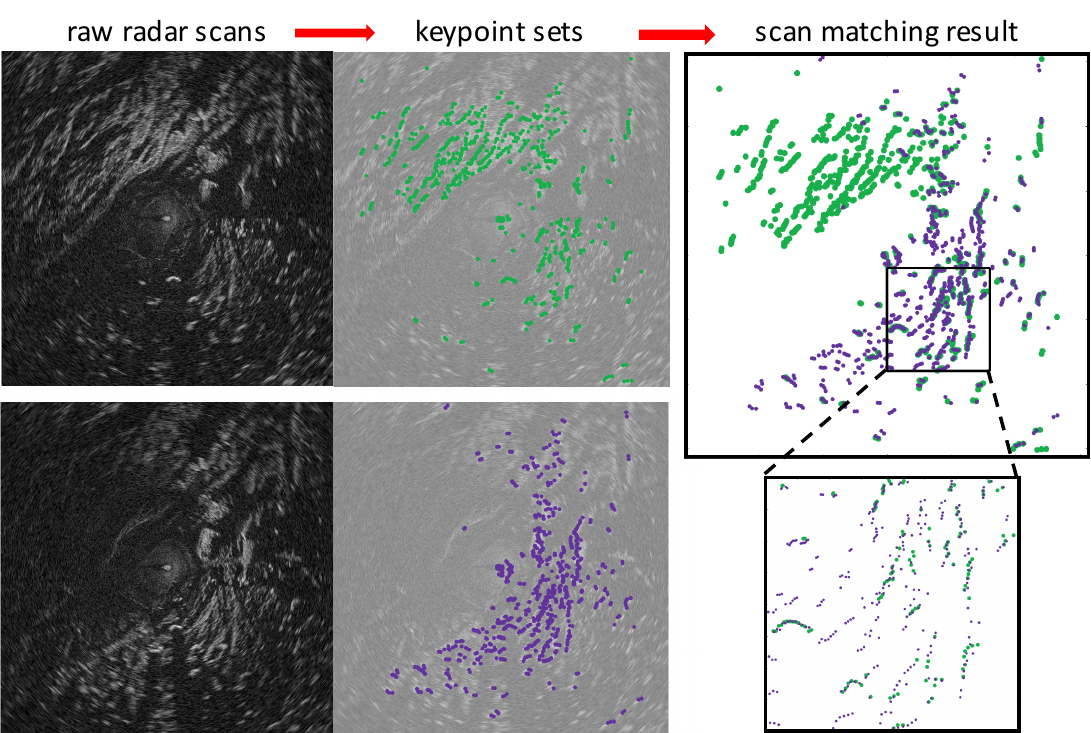}
    \caption{Demonstration of our one-parameter, radar-only odometry pipeline performing scan matching in a challenging scenario with no motion prior. On the left are two consecutive radar scans, which are 250$\times$250 m and taken 0.25 seconds apart while driving over uneven open terrain in Iceland. The keypoints extracted by our algorithm are shown in green and purple, and the alignment proposed by our data association algorithm is visualized on the right. Precise scan matching is achieved despite radar's high levels of measurement noise as well as significant appearance changes.\vspace{-0em}}
    \label{fig:first_page_pipeline}
\end{figure}

Common approaches to odometry for rangefinders have several key drawbacks, such as sensitivity to noise, reliance on other sensors, requiring \textit{a priori} knowledge, and utilizing restrictive models. In contrast, our radar-only system achieves precise odometry without model-reliant motion filters, outlier detection, or map creation. At its core is a geometrically-based, non-iterative scan matching algorithm that is displacement-independent and approximately optimizes a global objective function. Consequently, our pipeline works without a prior on the temporal or spatial relationship between keypoint sets and can handle significant measurement noise and missing detections, as shown in Fig. \ref{fig:first_page_pipeline}. 

\begin{figure*}[t!]
    \centering
    \includegraphics[width=\textwidth]{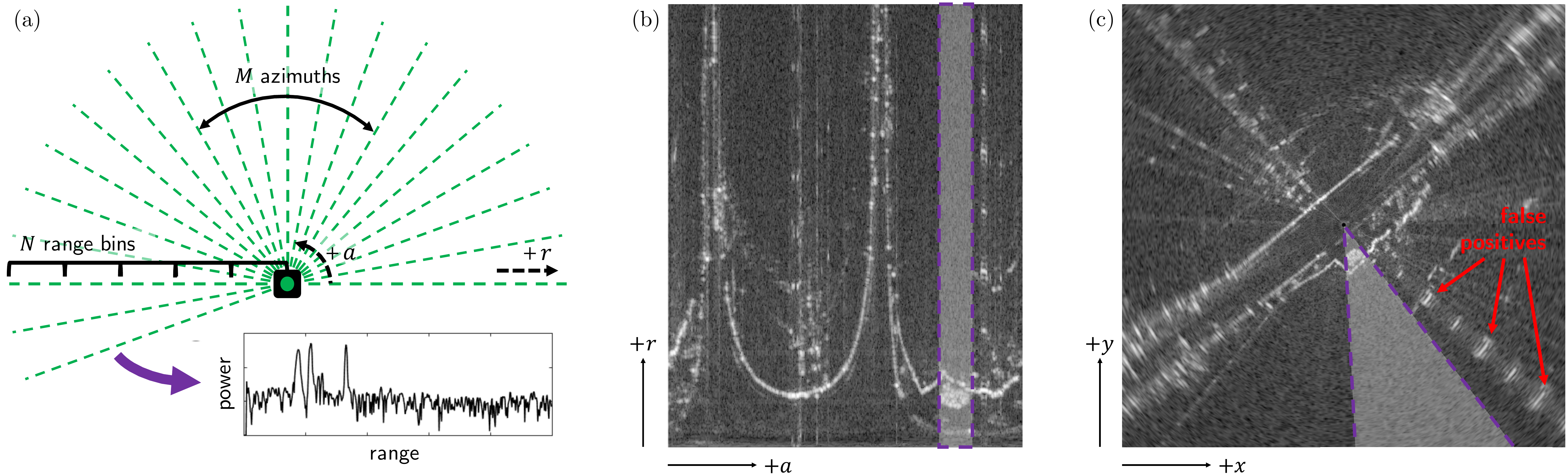}
    \caption{(a) Visualization of scanning radar operation from a bird's-eye view. The radar (green dot) sequentially observes $M$ azimuths (dotted lines), collecting readings for $N$ range bins along $r$. For each azimuth $a$, it outputs a power-range spectrum, an example of which is given on the bottom right. (b) The polar radar scan formed by combining the spectra of a full rotation of azimuths, as observed in Oxford city center, UK. (c) The corresponding Cartesian scan with equivalent regions in (b) and (c) highlighted. Note the high levels of speckle noise and false positives due to saturation.\vspace{-0.55em}}
    \label{fig:radar_op_and_scans}
\end{figure*}

In this paper, we present a revised system that features the same advantages enumerated above with several major improvements to \cite{cen2018mmwradar}. Specifically, our main contributions are: 
\begin{itemize}[noitemsep,topsep=0pt,partopsep=0pt]
\itemsep0em
    \item A new gradient-based, one-parameter keypoint extraction algorithm that functions well in diverse settings. 
    \item More efficient and robust data association.
    \item Successful real-time odometry in varied and challenging settings (e.g., off-road path in Iceland boulder field).
    \item In-depth theoretical analysis on system complexity and estimation uncertainty. 
\end{itemize}
Following a literature review in Section II, we present and analyze our pipeline in Sections III and IV. We provide experimental results and conclude in Sections V and VI. Both our works are presented and analyzed in detail in \cite{CenThesis}.

% RELATED WORKS ----------------------------
\section{Related works} \label{sec:related_works}

While visual \cite{aqel2016review,scaramuzza2011visual}, lidar \cite{zhang2017low,wolcott2017robust}, and wheel \cite{borenstein1996sensors} odometry are well studied, radar odometry remains challenging. Due to its wide spreadbeam and long range, radar has lower resolution than lidar and is highly susceptible to interference from clutter, which generates speckle noise. Radar scans also contain false positives %(see Fig. \ref{fig:radar_op_and_scans})
from multipath reflections and receiver saturation. As a result, radar odometry must be robust to measurement noise and false detections, and it must demonstrate high precision despite low-resolution data and slow update speeds. Odometry methods for radar can be categorized as indirect or direct. 
Indirect methods first extract salient keypoints, then associate those that correspond to the same location. Direct methods \cite{novak1978correlation,checchin2010radar,kellner2013instantaneous}, which forego keypoint extraction and operate on minimally pre-processed sensor outputs, are discussed in \cite{CenThesis,cen2018mmwradar} and not in this paper. All methods assume the majority of observed objects are static.

The first step of indirect methods is keypoint extraction, for which the most popular approach is constant false-alarm rate (CFAR) detection \cite{skolnik2008radar}, which distinguishes peaks from noise using sliding-window thresholding. CFAR and its variants generally require at least three tunable parameters, which are based on assumed noise characteristics and do not behave consistently across datasets (see \cite{cen2018mmwradar} for comparison). Some works leverage the knowledge that coherent structures make good keypoints by clustering or detecting the edges of  bright regions in scans \cite{werber2016interesting}. Others elect to represent the surroundings using predetermined geometric primitives \cite{lu1997robot} % cite{yang2002laser}
or models, like the normal distribution transform (NDT) \cite{rapp2015clustering}.  Vision-inspired works treat radar scans as images and extract features, like SIFT and FAST \cite{callmer2011radar}. 

These keypoints must then undergo data association, also known as scan matching \cite{adams2012robotic,lu1997robot} in robotics. The most common technique is iterative closest point (ICP) \cite{besl1992method}, which iteratively matches points using naive methods until the alignment between keypoint sets is sufficiently close \cite{lu1997robot,yang2002laser}. ICP relies on a good estimate of the relative displacement (i.e., motion prior) between scans. Other data association techniques search for  motion parameters that optimize some objective function, like maximizing similarity (e.g., overlap of Gaussian distributions for NDT \cite{rapp2017probabilistic}) or minimizing distance (e.g., cluster edge difference \cite{novak1978correlation}) between keypoint sets. Two further examples of objective functions characterize map quality \cite{chandran2006motion} and  radar scan distortion \cite{vivet2013localization} in terms of motion. Feature-based approaches associate keypoints using descriptors, like BASD \cite{schuster2016landmark}  and SURF \cite{callmer2011radar}. 

Many of the methods discussed do not generalize well due to the high levels of noise in radar scans. Adequate performance often requires fine tuning, \textit{a priori} knowledge, restrictive assumptions, or outlier detection. Several of the works rely heavily on other sensors for robustness, which compromises performance under conditions that cause these sensors to fail, or use simultaneous localization and mapping (SLAM),  which is accompanied by overhead costs and model-reliant motion filters \cite{schuster2016robust,callmer2011radar}. 

We utilize an indirect method to explicitly select salient information from noisy artifacts. We present two algorithms that improve on our previous work \cite{cen2018mmwradar}. The new pipeline is more efficient and easily adaptable across diverse settings. Accordingly, our keypoint extraction algorithm returns interpretable features with minimal redundancy, only one input parameter, and no assumptions about the scene structure or noise. Our data association algorithm does not need a motion prior or parameter tuning, and it is robust to large amounts of noise and false detections. An in-depth literature review and description of our contributions can be found in \cite{CenThesis}. 

\section{Radar-only ego-motion estimation} \label{sec:radar_only_egomotion_est}

\subsection{FMCW scanning radar}
% \subsection{MMW FMCW scanning radar}
\label{sec:radar_description}

FMCW radar, which is becoming more compact, safe, and affordable than alternative radars, collects long-range measurements with high accuracy and remains resilient under variable lighting and weather \cite{CenThesis}. Scanning radar, as shown in Fig. \ref{fig:radar_op_and_scans}, sequentially observes narrow angular regions as it rotates, allowing it to locate an object that falls inside the transmitted beam by its range and azimuth. Received power depends on object reflectivity, surface area, orientation, and material. A wide spreadbeam in elevation and long wavelength allow multiple objects to be detected per transmission. 

For each azimuth, radar outputs a one-dimensional signal, termed the power-range spectrum, which encodes the power reflected by the scatterers within the beam at each range. After a full rotation, radar returns a two-dimensional scan, as shown in Fig. \ref{fig:radar_op_and_scans}. Radar scans are both information-rich and data-efficient, but they contain several unwanted artifacts visible in Fig. \ref{fig:radar_op_and_scans}, including noise and false detections \cite{CenThesis}. %(from multipath reflections and receiver saturation). 
Another consideration of radar is its lower resolution and slower measurement update speeds compared to lidar. 

\begin{algorithm}[t]
 \KwIn{Radar scan $S \in \mathbb{R}^{m \times n}$} 
 \KwOut{Set of keypoints $L(S) \in \mathbb{R}^{p \times 2}$}
 \Parameter{Maximum number of keypoints $\ell_{\text{max}}$}
 \BlankLine
 $G \leftarrow \text{computeNormalizedGradientMagnitude(S)}$\label{line:gradient_mag}\\
 $S' \leftarrow S - \text{mean}(S)$\\ \label{line:subtract_mean}
 $H \leftarrow (1 - G) \times S' $\label{line:scal_img}\\
 $I \leftarrow \text{getIndicesOfElementsInDescendingOrder}(H)$\label{line:sort_indices}\\
 Initialize marked matrix $R \leftarrow [ \textit{false} ]^{m \times n}$, counter $\leftarrow \ell$ \label{line:initialize_marked_region}\\
 \While{\upshape{($\ell < \ell_{\text{max}}$) and (any \textit{false} in $R$)}}{
    $(a,r) \leftarrow \text{getNextIndices}(I)$ \label{line:get_index}\\
    \If{\upshape{not $R(a,r)$}} {
        $(r_{\text{low}}, r_{\text{high}}) \leftarrow \text{findRangeBoundaries}(S'[a,:])$ \label{line:range_bounds}\\ 
        \If{\upshape{none in $R[a,r_{\text{low}}:r_{\text{high}}]$}} {
            Increment $\ell$ \label{line:increment_counter}
        }
        $R[a,r_{\text{low}}:r_{\text{high}}] \leftarrow \textit{true}$ \label{line:mark_region}
    }
 }
 \For{\upshape{$a$ from $1$ to $m$}\label{line:for_add}}{
    \For{\upshape{each \upshape{marked region $Q$ in $R[a,:]$}}}{
        \If{\upshape{any in $R[a-1,Q] \cup R[a+1,Q]$}}{
            $L \leftarrow {L \cup (a,r)}$ for $H[a,r] = \text{max}(H[a,Q])$ \label{line:add_keypoint}
        }
    }
 }
  \caption{Keypoint Extraction} \label{alg:KE}
\end{algorithm}

\subsection{Keypoint extraction}
\label{sec:KE}

\begin{figure}[t]
    \centering
    \includegraphics[width=\columnwidth]{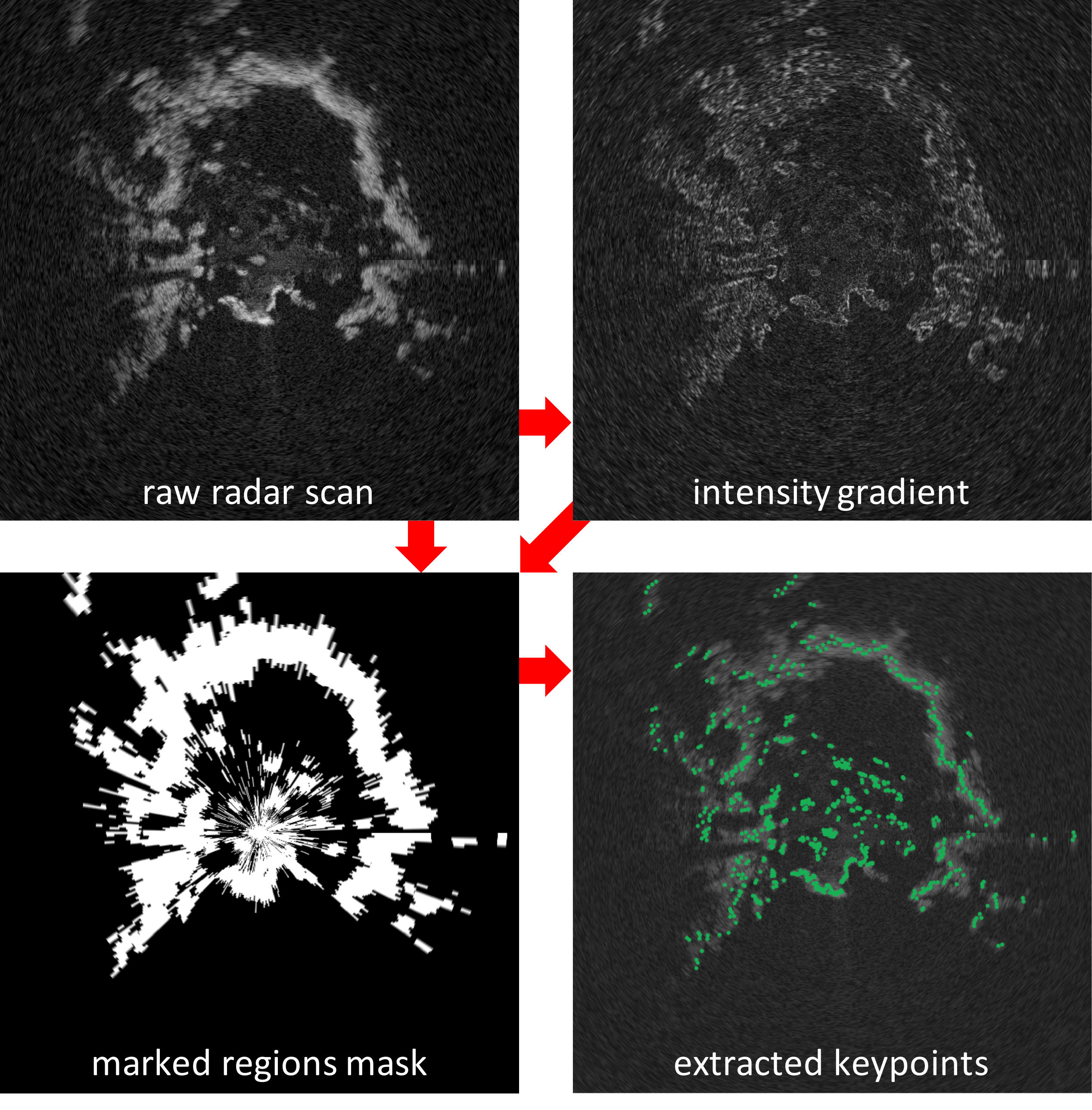}
    \caption{Visualization of our keypoint extraction approach operating in a forest in Iceland. From the 250$\times$250 m radar scan (top left) and its intensity gradient (top right), important regions in the scan are identified. Those marked (bottom left), the number of which is capped by the only tunable parameter $\ell_{\text{max}}$, are used to produce a keypoint set (bottom right) that captures the scene structure while minimizing redundant keypoints.
    }
\end{figure}

In this section, we present a keypoint extraction algorithm that is designed to: 
\begin{itemize}[noitemsep,topsep=0pt,partopsep=0pt]
    \item Return interpretable and meaningful keypoints.
    \item Be straightforward to use and require only one input parameter specifying the maximum number of keypoints.
    \item Adapt to diverse settings without \textit{a priori} knowledge, like the expected noise characteristics.
    \item Avoid redundant returns to improve data efficiency.
    \item Ignore bright swathes due to rolling, pitching, or clutter.
\end{itemize}
The procedure is given in Algorithm \ref{alg:KE}. It takes as an input the raw radar scan, which contains the power readings for $m$ azimuths and $n$ range bins, and the desired maximum number of keypoints $\ell_{\text{max}}$. The radar scan is re-scaled to favor areas of high intensity and low gradients (using the Prewitt operator), storing the result in $H$ (lines \ref{line:gradient_mag}-\ref{line:scal_img}). The algorithm then iterates through points $(a,r)$ in order of highest to lowest $H(a,r)$ and marking off adjacent regions in $R$ that should be ignored in future iterations. These regions are identified by searching $S'$ along $a$ for the closest range indices $(r_{\text{low}}, r_{\text{high}})$ below and above $r$ with values less than $0$ (lines \ref{line:sort_indices}-\ref{line:mark_region}). After $\ell_\text{max}$ regions are marked, iterate through $a$'s. For each  continuous marked region in $a$, the range bin $r$ with the highest value is added to keypoint set $L$. Isolated detections, which are more likely to be speckle noise, are eliminated when a marked region has no adjacent neighbors in azimuth (lines \ref{line:for_add}-\ref{line:add_keypoint}). 

The algorithm returns meaningful keypoints by seeking peaks in the scan (areas of high intensity and low gradient). It minimizes redundancy by returning only one keypoint per continuously marked sequence of range bins. This strategy also prevents large bright regions due to rolling, pitching, or clutter (RPC) from producing many keypoints that can distract from important keypoints corresponding to stable objects. Furthermore, by minimizing the amount of prior knowledge required and seeking coherent structures irrespective of noise, the algorithm adapts to diverse settings with varying levels of noise. The only input parameter is an upper bound on the number of desired keypoints. 

In \cite{CenThesis}, we discuss how CFAR is not ideal for our objectives. It requires multiple tunable parameters, provides redundant returns, and often struggles with bright patches caused by RPC.  In comparison to our previous work \cite{cen2018mmwradar}, our new algorithm has fewer parameters and operates well even in unstructured environments, like forests.

%%%%%%%%%%%%%%%%%%%%%%%%%%%%%%%%%%%%%%%%%%%%%%%%%
\subsection{Data association}
\label{sec:DA}

In this section, we present an improved data association algorithm, which is revised from \cite{cen2018mmwradar} and designed to: 
\begin{itemize}[noitemsep,topsep=0pt,partopsep=0pt]
    \item Have zero tunable parameters. 
    \item Work without prior knowledge on the relative displacement (translational or rotational) between scans.
    \item Handle high levels of measurement noise, false (or ghost) objects, and other appearance changes.
    \item Utilize global information across the scans to generate correspondences that are mutually consistent. 
\end{itemize}
Our data association algorithm is an example of graph matching \cite{leordeanu2005spectral}. It seeks point correspondences based on global geometric constraints. Unlike ICP, which relies on the approximate alignment of scans, our method does not need prior knowledge of the relative displacement between scans. With no parameter tuning, it is easy to apply across datasets. Moreover, by formulating data association as a global optimization problem, our approach utilizes information across the entire scan and naturally removes outliers. Compared to \cite{cen2018mmwradar}, our current method is more efficient and robust due to changes to the unary keypoint descriptor and termination condition. As given in Algorithm \ref{alg:DA}, our approach takes in two keypoint sets $L_1$ and $L_2$ and proceeds in three stages.  

\begin{algorithm}[t]
 \KwIn{Keypoint sets $L_1$ and $L_2$, where $|L_1| \leq |L_2|$}
 \KwOut{Selected matches $M(L_1,L_2)$}
 \Parameter{Azimuth and range resolutions $\alpha$ and $\rho$, respectively, of the radar being used}
 \BlankLine
 $U^{u \times 2} \leftarrow \text{unaryMatchesFromDescriptors}(L_1,L_2,\alpha,\rho)$ \label{line:unarymatches}\\
 $C^{u \times u} \leftarrow \text{pairwiseCompatibilityScores}(U,L_1,L_2)$ \label{line:pairwisecompat1}\\ 
 $\mathbf{v}^* \leftarrow \text{principalEigenvector}(C)$ \label{line:maxeigenvec}\\
 Initialize empty set $\mathcal{M}$ and $\hat{\mathbf{m}} \leftarrow [0]^{u \times 1}$\label{line:matchinit}\\
 Initialize $\textit{unsearched} \leftarrow \{1, 2, \hdots, u\}$ and $\textit{score} \leftarrow 0$\label{line:unsearchedinit}\\
 \While{\upshape{\textit{unsearched} is not empty}\label{line:whilecond}}{
    $\hat{{m}}_g \leftarrow 1$ given $g$ s.t. ${v}^{*2}_g \geq {v}^{*2}_h \hspace{2pt} \forall \hspace{2pt} h \in \textit{unsearched}$ \label{line:maxrewardmatch}\\
  
    Terminate if $\hat{\mathbf{m}}^\top C \hat{\mathbf{m}} < \textit{score}$ \label{line:terminationcond}\\
    Add the match $U[g,:]$ to $\mathcal{M}$ \label{line:addmatch}\\
     Remove values $h$ from $\textit{unsearched}$ if  $(U[g,1] = U[h,1] \cup U[g,2] = U[h,2])$ \label{line:alternativematches1}
 }
  \caption{Data Association} \label{alg:DA}
\end{algorithm}

\textit{Stage 1}: Without loss of generality, assume $u = |L_1| \leq |L_2|$. A unary candidate proposal method (see \cite{cen2018mmwradar}) matches each keypoint $i_1 \in L_1$  to another $i_2 \in L_2$, and the pair is added to $U$ (line \ref{line:unarymatches}). In this paper, we present a new rotation-invariant descriptor $d^i \in [0,1]^{\alpha + \rho}$ for keypoint $i$. To compute $d[1$:$\alpha]$, we create a histogram of the number of elements in each angular slice around keypoint $i$, take its fast Fourier transform, then normalize its phase. To compute $d[\alpha+1$:$\alpha+\rho]$, we create a histogram of the number of elements in each annulus around keypoint $i$, then normalize it. The angular slices and annuli have the same spatial resolution as the radar scan. Note that, as done in \cite{cen2018mmwradar}, each element contributing to the descriptor is weighted by its range relative to the radar to correct for radar's range-density bias. 

\textit{Stage 2}: To refine the match proposals in $U$, each pairwise combination of keypoints $g = (i_1,j_1) \in L_1$ is compared to its associated (according to $U$) match pair $h = (i_2,j_2) \in L_2$. If $i_1$ and $i_2$ do indeed represent identical keypoints in the environment and the same holds true for $j_1$ and $j_2$, then the pairwise relationships $g$ and $h$ must be similar, which is quantified by the compatibility score $C_{gh} \in [0,1]$ of the non-negative symmetric compatibility matrix $C$ (line \ref{line:pairwisecompat1}). The formulation of this score is given in \cite{cen2018mmwradar}. 

\textit{Stage 3}: Maximizing the global compatibility score is done by finding the optimal match vector $\mathbf{m}^*$ such that: 
\begin{align}
    \mathbf{m}^* = \argmax_{\mathbf{m} \in \{0,1\}^{u \times 1}} \bigg(\frac{\mathbf{m}^\top C \mathbf{m}}{\mathbf{m}^\top  \mathbf{m}} \bigg). \label{eq:optim}
\end{align}
Logically, $\mathbf{m}$ must satisfy the (i) integrality constraint: all elements are either $0$ or $1$ (i.e., a match is true or false) and (ii) uniqueness constraint (i.e., each keypoint in $L_1$ cannot maps to more than one  keypoint in $L_2$, and vice versa). We define $\mathbf{m}$ such that  $m_k = 1$ if $U_k = (i_1,i_2) \in M$ and $m_k = 0$ otherwise, and let $\mathcal{M}: L_1 \to L_2$ be the mapping between keypoint sets according to $M$. This formulation is known as graph matching \cite{leordeanu2005spectral}, and it is NP-hard, so we relax the constraints %\eqref{mathc:integral}-\eqref{mathc:1to1_backward} 
and solve instead for $\mathbf{v}^*$ such that:
\begin{align}
    \mathbf{v}^* = \argmax_{\mathbf{v} \in \mathbb{R}^{u \times 1}} \bigg(\frac{\mathbf{v}^\top C \mathbf{v}}{\mathbf{v}^\top  \mathbf{v}} \bigg). \label{eq:relax_max}
\end{align}
By the Raleigh's ratio and Perron-Frobenius theorems \cite{leordeanu2005spectral}, the solution $\mathbf{v}^* \in [0, 1]^{u \times 1}$ is the principal eigenvector of $C$. We use $\mathbf{v}^*$ to approximate the optimal constrained solution to \eqref{eq:optim} via the greedy approach in lines \ref{line:matchinit}-\ref{line:alternativematches1}. We present a new termination condition that stops adding matches when the new one is incompatible with those already in $M$, as explained in Section \ref{sec:theoretical_analysis}. Compared to our previous work \cite{cen2018mmwradar}, the new condition is more robust and consistent because it does not depend on the magnitude of elements in $\mathbf{v}^*$. 

\begin{figure}[t]
    \centering
    \includegraphics[width=\columnwidth]{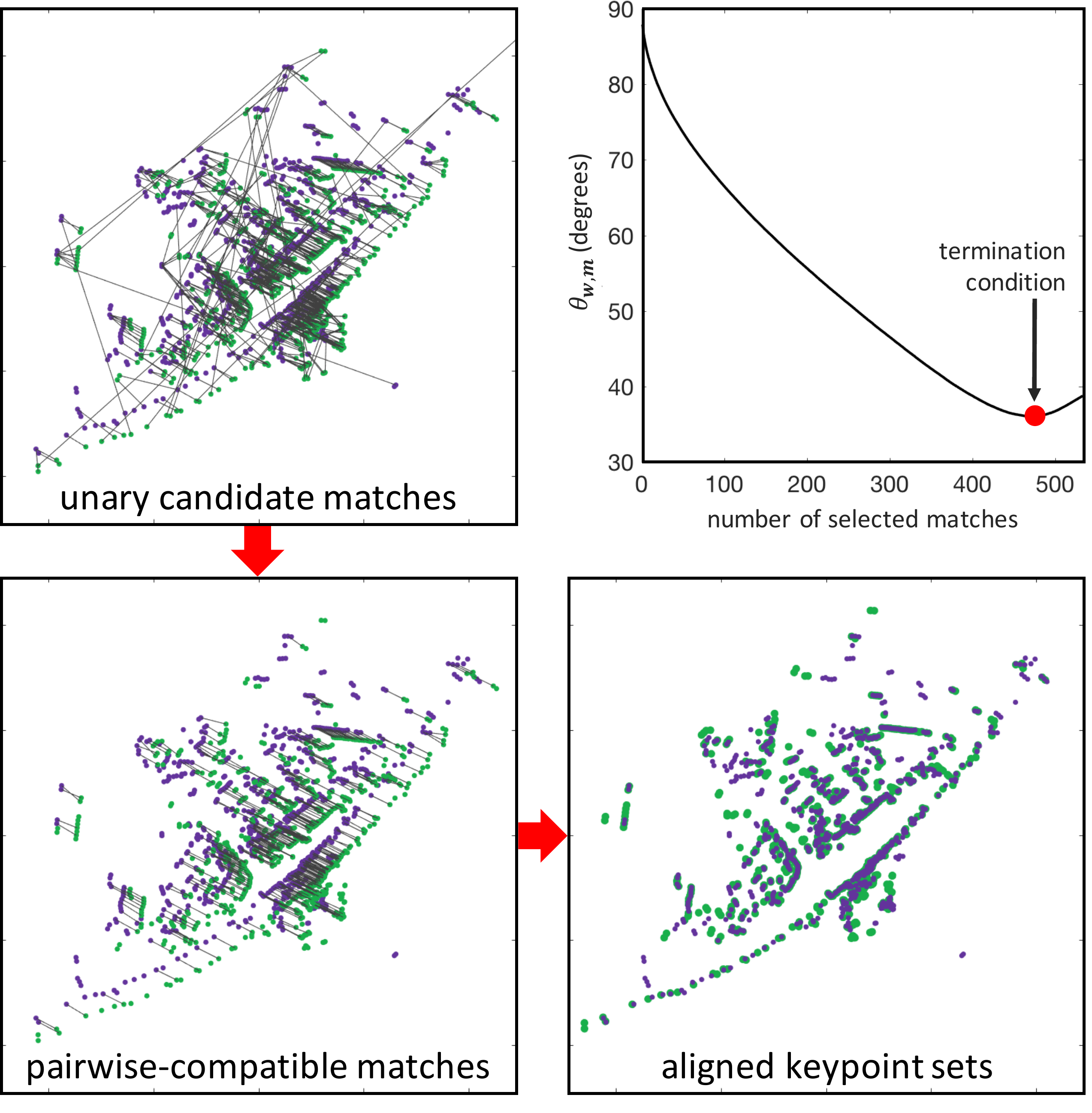}
    \caption{Visualization of our data association approach operating in Warwick, UK. It first proposes unary matches using feature descriptors (top left), then uses pairwise relationships to identify  mutually consistent matches (bottom left). Our method approximates the solution to an NP-hard problem using a greedy algorithm, for which the termination condition is visualized on the top right. The final alignment (bottom right) shows the robustness to noise.
    }
\end{figure}
The rigid-body motion (RBM) is found using a well-known  technique \cite{challis1995procedure}. As radar scans are in 2D, ego-motion is restricted to 3-DOF: planar translation and rotation.  

%%%%%%%%%%%%%%%%%%%%%%%%%%%%%%%%%%%%%%%%%%%%%%%%%%%
\section{System analysis}

%%%%%%%%%%%%%%%%%%%%%%%%%%%%%%%%%%%%%%%%%%%%%%%%%%%
\subsection{Algorithmic complexity}
The complexity of our system is summarized in Table \ref{tab:complexity}. For most radars, $A < R$. To estimate the average complexity, it is assumed that $A < \ell_{\text{max}} < R$. The second row gives the complexity for data association with no prior on the relative motion between scans. When running the system in real time, because the relative motion between consecutive time steps is limited, a mild prior can be  used to improve computational efficiency by reducing the number of unary descriptor comparisons. For this analysis, the intensity gradient, ordering, and principal eigenvector are computed using the Prewitt operator, merge sort, and power iteration, respectively. 

Overall, the system computation is generally dominated by the unary candidate matching step in \textit{Stage 1} of data association, such that the average system complexity is given by $\Theta (\ell_\text{max}^2(A + R)) \sim \Theta (\ell_\text{max}^3)$. Because the bottleneck depends on $\ell_\text{max}$, we intentionally allow $\ell_\text{max}$ to be set by the user as the \textit{only} tunable input parameter of our system.

%%%%%%%%%%%%%%%%%%%%%%%%%%%%%%%%%%%%%%%%%%%%%%%%%%%
\subsection{Theoretical analysis and performance measures}\label{sec:theoretical_analysis}

The fundamental principle behind our system is the optimization problem \eqref{eq:optim}. % -\eqref{mathc:1to1_backward}. 
Intuitively, it seeks the set of matches that maximizes the global compatibility score $\mathcal{G}({\mathbf{x},C}) = \frac{\mathbf{x}^\top C \mathbf{x}}{\mathbf{x}^\top  \mathbf{x}}$ in \eqref{eq:optim}. Because this problem is NP-hard, our algorithm approximates the optimal solution. This section investigates how well solution $M$ approximates $M^*$. Let $C = C^* + E$, where $C_{ij}^* = C_{ij}$ if and only if $U_i,U_j \in M^*$, and $C_{ij}^* = 0$ otherwise. $C^*$ is the corrected $C$ in which $k \notin M^*$ does not inflate the compatibility scores of $i \in M^*$, and $E$ can be considered a perturbation from $C^*$ due to incorrect matches.

\begin{proposition}
$\mathcal{G}({\mathbf{m},C})$ always overestimates the true global compatibility score, except when all proposed matches are correct (i.e., $U \in M^*$), in which case they are equal.
    \label{lemma_optimistic}
\end{proposition}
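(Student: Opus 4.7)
The plan is to exploit the additive decomposition $C = C^* + E$ that the statement already sets up. The \emph{true} global compatibility score associated with a match vector $\mathbf{m}$ is naturally $\mathcal{G}(\mathbf{m}, C^*)$, since $C^*$ retains pairwise compatibilities only among genuinely correct matches and zeroes out the inflation introduced by spurious candidates in $U$. Under this reading, the proposition reduces to the claim that $\mathcal{G}(\mathbf{m}, C) \geq \mathcal{G}(\mathbf{m}, C^*)$ for every admissible $\mathbf{m}$, with equality precisely when $U \subseteq M^*$.

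First, by bilinearity of the numerator and because the denominator $\mathbf{m}^\top \mathbf{m}$ is unaffected by the splitting of $C$, I would write
\begin{equation*}
\mathcal{G}(\mathbf{m}, C) \;=\; \frac{\mathbf{m}^\top (C^* + E)\, \mathbf{m}}{\mathbf{m}^\top \mathbf{m}} \;=\; \mathcal{G}(\mathbf{m}, C^*) \;+\; \frac{\mathbf{m}^\top E\, \mathbf{m}}{\mathbf{m}^\top \mathbf{m}} .
\end{equation*}
Hence the entire statement collapses to the sign analysis of $\mathbf{m}^\top E\, \mathbf{m}$. Two structural facts established in Stage 2 of Algorithm \ref{alg:DA} then do the work: the pairwise compatibility scores satisfy $C_{ij} \in [0,1]$, so the entries of $E$ are nonnegative, and $\mathbf{m} \in \{0,1\}^{u}$, so
\begin{equation*}
\mathbf{m}^\top E\, \mathbf{m} \;=\; \sum_{i,j : m_i = m_j = 1} E_{ij} \;\geq\; 0 ,
\end{equation*}
which gives the ``overestimate'' inequality immediately.

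For the equality characterization I would argue both directions. If $U \subseteq M^*$, then every index pair $(i,j)$ satisfies $U_i, U_j \in M^*$, so by the definition of $C^*$ we have $E_{ij} = 0$ identically, and therefore $\mathcal{G}(\mathbf{m}, C) = \mathcal{G}(\mathbf{m}, C^*)$ for every $\mathbf{m}$. Conversely, if some $U_k \notin M^*$, then the entire $k$-th row and column of $E$ coincide with those of $C$, and in particular contain strictly positive entries wherever $U_k$ exhibits any geometric consistency with another proposed match; any $\mathbf{m}$ that selects such a $k$ then yields $\mathbf{m}^\top E\, \mathbf{m} > 0$ and a strict overestimate. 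The main obstacle I anticipate is stating the converse cleanly: one has to acknowledge that $\mathcal{G}(\mathbf{m}, C) = \mathcal{G}(\mathbf{m}, C^*)$ can still hold for a pathological $\mathbf{m}$ that happens to activate no incorrect-match index, so the strict part of the proposition should be understood for the $\mathbf{m}$ actually returned by the greedy procedure of lines \ref{line:matchinit}--\ref{line:alternativematches1}, which necessarily selects at least one spurious index whenever $U \not\subseteq M^*$.
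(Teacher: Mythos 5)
Your proof is correct and takes essentially the same route as the paper: decompose $C = C^* + E$, note that $E$ is entrywise nonnegative and $\mathbf{m} \geq 0$, so $\mathbf{m}^\top E \mathbf{m} \geq 0$ yields $\mathcal{G}(\mathbf{m},C) \geq \mathcal{G}(\mathbf{m},C^*)$. Your discussion of the equality case (and its caveat about an $\mathbf{m}$ that avoids every spurious index) is in fact more careful than the paper's proof, which establishes only the inequality and leaves the equality clause implicit.
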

\begin{proof} 
By definition of $C$ and $C^*$, $E$ is non-negative: $\mathbf{x}^\top E \mathbf{x} \geq 0$ for any $\mathbf{x} \geq 0$ $\Rightarrow$ $\mathbf{m}^\top C \mathbf{m} =  \mathbf{m}^\top C^* \mathbf{m} + \mathbf{m}^\top E \mathbf{m} \geq \mathbf{m}^\top C^* \mathbf{m}$, or $\mathcal{G}({\mathbf{m},C}) \geq \mathcal{G}({\mathbf{m},C^*})$. 
\end{proof}

Proposition \ref{lemma_optimistic} implies that match selection is overly optimistic, and it is more likely to add too many matches, some of which are bad, than to not add good ones. Therefore, when using $\mathbf{v}_1$, we seek to: (1) resolve conflicting proposed matches in order to satisfy the constraints on \eqref{eq:optim} by selecting the best of them, and (2) avoid adding poor matches. Our algorithm achieves the first goal by adding matches in order of largest corresponding value in $\mathbf{v}_1$. It achieves the second with the termination condition, explained next. Let $\overline{C}$ be the compatibility matrix assuming perfect measurements. 
\begin{proposition}
   Assuming no perfect symmetry exists in the keypoint sets, $\mathbf{m}^* = \argmax_{\mathbf{x} \in [0,1]^{u}} \mathcal{G}({\mathbf{x},\overline{C}^*})$. 
    \label{lemma_max_vec}
\end{proposition}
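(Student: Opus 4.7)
The plan is to exploit the block structure of $\overline{C}^*$ that follows from the perfect-measurement assumption. Let $S = \{k : U_k \in M^*\}$ denote the index set of correct matches, so that by construction $\mathbf{m}^*$ is the indicator vector $\mathbf{1}_S$. Under perfect measurements, any pair of true matches $U_i,U_j \in M^*$ is perfectly geometrically consistent, so $\overline{C}_{ij}$ attains the maximal compatibility value (which, by the normalization of the score into $[0,1]$, we may take as $1$); by the definition of $C^*$, all remaining entries vanish. Arranging rows and columns so that $S$ comes first, $\overline{C}^*$ is thus the $|S|\times|S|$ all-ones matrix padded with zeros.

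Given this structure, for any $\mathbf{x} \in [0,1]^u$,
\begin{equation*}
  \mathbf{x}^{\top} \overline{C}^* \mathbf{x} \;=\; \Bigl(\sum_{i \in S} x_i\Bigr)^{\!2},
\end{equation*}
and Cauchy-Schwarz gives $\bigl(\sum_{i \in S} x_i\bigr)^2 \leq |S|\sum_{i \in S} x_i^2 \leq |S|\,\mathbf{x}^{\top}\mathbf{x}$, whence $\mathcal{G}(\mathbf{x},\overline{C}^*) \leq |S|$ throughout $[0,1]^u$. A direct substitution then yields $\mathcal{G}(\mathbf{m}^*,\overline{C}^*) = |S|^2/|S| = |S|$, so $\mathbf{m}^*$ attains the upper bound and is therefore an argmax. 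Equivalently, Perron-Frobenius applied to the all-ones sub-block identifies its principal eigenvector as proportional to $\mathbf{1}_S$ with eigenvalue $|S|$, so the Rayleigh quotient is maximized along this ray, on which $\mathbf{m}^*$ lies.

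The role of the no-symmetry assumption is to guarantee that $M^*$, and hence $S$ and $\mathbf{m}^*$, are well-defined. If the keypoint sets admitted a nontrivial symmetry (e.g.\ a rotationally symmetric configuration of landmarks), several disjoint consistent correspondence sets could each give rise to an all-ones block in $\overline{C}^*$ with the same maximal value, and the argmax statement would be ambiguous. Ruling this out pins down a unique $\mathbf{m}^*$.

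I expect the main obstacle to be the very first step: verifying that $\overline{C}_{ij}$ really does saturate to its upper value whenever $U_i,U_j \in M^*$ under perfect measurements. That requires invoking the explicit pairwise compatibility formula from \cite{cen2018mmwradar}, together with the rigid-body assumption that ensures inter-keypoint geometry is preserved across scans. Once that foundation is granted, the Cauchy-Schwarz (or Perron-Frobenius) argument is short, and the no-symmetry hypothesis handles the only remaining subtlety, namely uniqueness of $M^*$.
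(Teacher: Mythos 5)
Your proof is correct and rests on the same foundation as the paper's: under perfect measurements every pair of true matches saturates the compatibility score, so by the definition of the starred matrix $\overline{C}^*$ is (up to the diagonal convention) an all-ones block on the true-match indices padded with zeros, and the box-constrained Rayleigh quotient is maximized by the indicator of that block. Where you differ is in execution: the paper only sketches the maximization, arguing through the uniqueness and integrality language of \eqref{eq:optim} that any index whose row contains no unit entry must receive weight $0$ and any index in $M^*$ receives weight $1$, whereas you make the optimization step explicit with the Cauchy--Schwarz bound $\mathcal{G}(\mathbf{x},\overline{C}^*) \leq |S|$ attained at $\mathbf{m}^*$ (equivalently Perron--Frobenius on the all-ones block). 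That buys rigor the paper's terse wording lacks, at the cost of being slightly loose in the same place the paper is: the Rayleigh quotient is scale-invariant, so the argmax is really the ray $\{c\,\mathbf{m}^* : c \in (0,1]\}$ rather than the single point $\mathbf{m}^*$ (you acknowledge this; the paper does not). Two small remarks: the zero-versus-one diagonal convention changes the attained value ($|S|$ versus $|S|-1$) but not the maximizing direction, so nothing breaks; and your reading of the no-symmetry hypothesis (well-definedness/uniqueness of $M^*$) is slightly different from the paper's, which invokes it to guarantee $0 \leq \overline{C}^*_{ij} < 1$ strictly for any pair not both in $M^*$, i.e.\ that no spurious pair can also saturate the score --- the two readings are closely related, and neither is needed for your bound itself, only for the conclusion to single out the true match set. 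The one step you flag as uncertain, saturation of $\overline{C}_{ij}$ for true pairs under perfect measurements, is exactly what the paper asserts ``by the definition of compatibility scores,'' so it is granted.
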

\begin{proof} 
By the definition of compatibility scores \cite{cen2018mmwradar}, the maximum score $\overline{C}_{ij}^* = 1$ if $i,j \in M^*$, and $0 \leq \overline{C}^*_{ij} < 1$ otherwise. Since we assume perfect measurements and no symmetries, to meet the uniqueness constraint in \eqref{eq:optim}, $m_i^* \neq 1 \Rightarrow \overline{C}^*_{ij} \neq 1 \forall j \in \{1, \hdots, u\}$. By the integral constraint, $m_i^* \neq 1 \Rightarrow m_i^* = 0$, and  $m_i^* = 1$ if $i \in M^*$. 
\end{proof} 

By Proposition \ref{lemma_max_vec}, given the true compatibility matrix $\overline{C}^*$ and optimal match set $M^*$, the solution is, in fact, the binary vector $\mathbf{m}^*$ even though no integral constraints are placed on $\mathbf{x}$. Since all accepted matches in $\mathbf{m}^*$ are equally weighted, they are equally valued and agree with one another. We use this fact to seek a solution $M$ that, like $M^*$, contains matches that are mutually consistent. By Proposition \ref{lemma_optimistic}, the global compatibility score is an optimistic measure (i.e., $\mathcal{G}({\mathbf{m^1},C}) > \mathcal{G}({\mathbf{m^2},C})$ does not imply that matches in $M^1$ are better than those in $M^2$). We use Proposition \ref{lemma_max_vec} to construct a more reliable measure, as follows. 

\begin{measure}
The index $\mathcal{I}_{M} = \text{cos}(\theta_{\mathbf{w},\mathbf{m}}) \in [0,1]$ measures the mutual compatibility of selected matches $M$, where $\theta_{\mathbf{w},\mathbf{m}}$ is the angle between vectors $\mathbf{w} = C (\mathbf{m} \odot \mathbf{v}_1)$ and $\mathbf{m}$.
\end{measure}

Using this property, we terminate the greedy algorithm when the next match would cause $\mathcal{I}_\mathcal{M}$ to decreases. Intuitively, the new match would cause $\mathbf{w}$ to move farther from its corresponding binary solution, where this separation is inversely related to the consistency between matches. Although similar to $\text{cos}(\theta_{\mathbf{m},C\mathbf{m}}) = \mathcal{G}_{\mathbf{m},C}$, the continuous values ${v}_{1,i}$ in $\mathcal{I}_\mathcal{M}$ scale match $i$'s contribution by its compatibility.  

\begin{measure}
Assume the chosen solution is the optimal one: $M = M^*$. Let $\lambda_i^*$ represent the $i$-th largest eigenvalue of $C^*$. Then, the normalized eigengap $\mathcal{I}_{\text{eg}} = (\lambda^*_1 - \lambda^*_2)/u \in [0,1]$ measures the solution's robustness to perturbations. 
\end{measure}

While $\mathcal{I}_{M}$ is introspective (quantifies how well the selected matches agree), the second measure $\mathcal{I}_{\text{eg}}$ considers all proposed matches $U$ and quantifies the system's confidence it its solution $M^*$ over the next-highest scoring solution.

\begin{table}
\begin{center}
\caption{Algorithmic time complexity}\label{tab:complexity}
\begin{threeparttable}
\begin{tabularx}{\columnwidth}{@{\extracolsep{\fill}}l l l l @{}}
  \toprule
   & Average & Worst
  \\
  \midrule
  KE\tnote{1} 
  & $\Theta(AR \text{ log}(AR) + \ell_{\text{max}})$  & $O(AR \text{ log}(AR) + \ell_{\text{max} }R)$ \\ %& $\Omega(AR \text{log}(AR) + \ell_{\text{max}})$  \\
  DA%\tnote{2} 
  & $\Theta(\ell_\text{max}^2 (A + R))$  & $O(\ell_\text{max}^2 (A + R + \text{log}(\ell_\text{max}) ))$ \\%& $\Omega()$ \\
   DA\tnote{2} %\tnote{3} 
   & $\Theta(\ell_\text{max}^2 \text{ log}(\ell_\text{max}))$  & $O(\ell_\text{max}^2 \text{ log}(\ell_\text{max}) )$ \\%& $\Omega()$ \\
  ME\tnote{3}%
  & $\Theta(\ell_\text{max}^2)$  & $O(\ell_\text{max}^3)$ \\%& $\Omega()$ \\
  \bottomrule
\end{tabularx}
\begin{tablenotes}[flushleft]\footnotesize
    \item[1] Keypoint extraction
    \item[1] Data association with mild prior on relative motion
    \item[2] Rigid-body motion estimation
\end{tablenotes}
\end{threeparttable}
\vspace{-0.5em}
\end{center}
\end{table}

%%%%%%%%%%%%%%%%%%%%%%%%%%%%%%%%%%%%%%%%%%%%%%%%%%%
\section{Results} \label{sec:results}

\begin{figure*}[t!]
    \centering
    \begin{subfigure}[t]{0.45\textwidth}
        \centering
        \includegraphics[width=\textwidth]{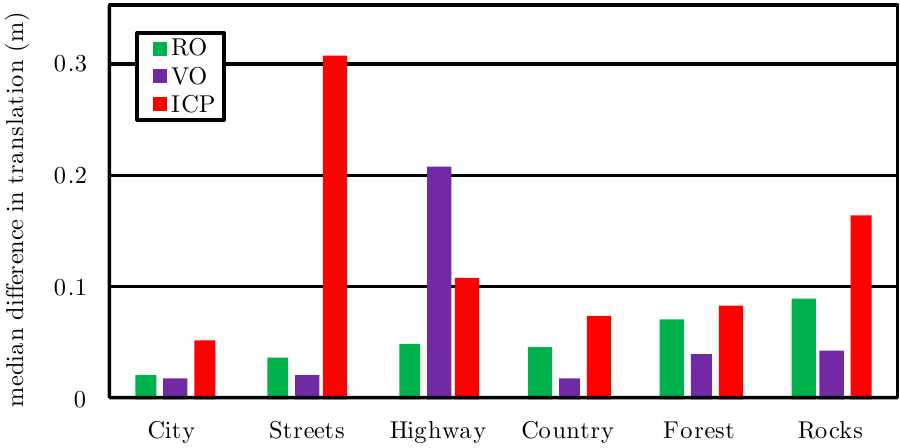}
        \caption{}
    \end{subfigure}%
    \hspace{0.09\textwidth}
    \begin{subfigure}[t]{0.45\textwidth}
        \centering
        \includegraphics[width=\textwidth]{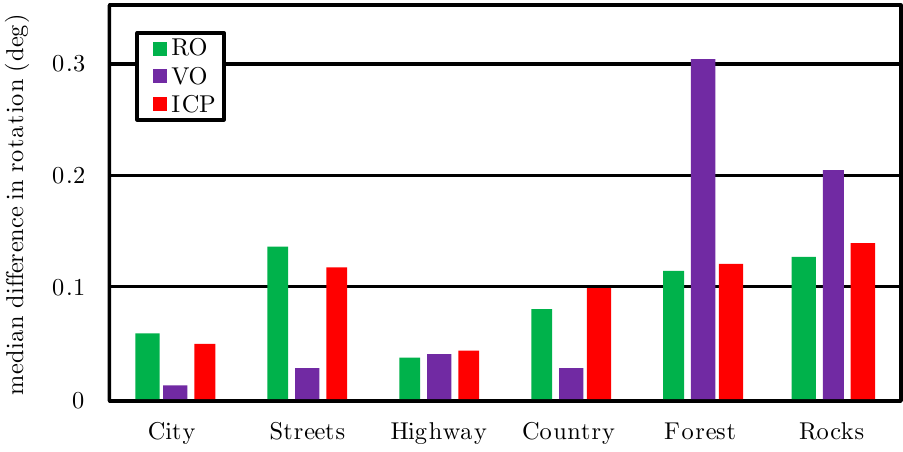}
        \caption{}
    \end{subfigure}
    \caption{Evaluation of odometry performance across six diverse settings, measured by the median difference between each method's estimated displacements relative to those of GPS in translation (a) and rotation (b). Displacement is the relative motion between consecutive scans obtained at $\sim$4 Hz. For each setting, RO (our pipeline), VO, and ICP are compared to GPS. Both RO and ICP use the keypoint sets returned by our KE algorithm. 
    }\vspace{-0.4em}\label{fig:odom_bar}
\end{figure*}

In \cite{cen2018mmwradar}, we demonstrate RO that performs comparably to VO and GPS (under ideal conditions for the latter two) and maintains precise odometry even under conditions for which VO and GPS fail. In this section, we build on this result and show that our new RO pipeline, which improves upon \cite{cen2018mmwradar}, achieves high accuracy in real time across diverse settings, from UK city streets to off-road landscapes in Iceland.

\begin{table}
\begin{center}
\caption{Datasets and their characteristics}\label{tab:diverse_environments}
\begin{threeparttable}
\begin{tabularx}{\columnwidth}{@{\extracolsep{\fill}} l  l c c c c}
\toprule
  \multirow{ 2}{*}{Dataset} & \multirow{ 2}{*}{Setting\tnote{1}} & Avg. T. & Avg. R. & Dist. & RO\\
          &  & (m)\tnote{2} & (deg)\tnote{2} & (km) & Prior\tnote{3}\\
 \midrule
 City &  Oxford, UK & 1.268 & 1.489 & 8.36 & None
 \\
 Backstreets  & Oxford, UK & 1.252 & 1.804 & 2.76 & None\\
 Highway & M40, UK & 5.164 & 1.076 & 19.6& Max.\hspace{2pt}Acc.\\
 Countryside &  Rural UK  & 2.297 & 1.264 & 16.8 & Max.\hspace{2pt}Acc.
 \\
 Forest &  Heidmörk, IS & 3.168 & 0.769 & 9.37 & None
 \\
 Rocks &  Thórsmörk, IS  & 1.267 & 1.036 & 2.35 & Max.\hspace{2pt}Acc.
 \\
\bottomrule
\end{tabularx}
\begin{tablenotes}[flushleft]\footnotesize
    \item[1] UK and IS stand for the United Kingdom and Iceland, respectively. 
    \item[2] These quantities are the average movement (in translation and rotation according to RO) between consecutive scans when the vehicle is not stationary. Scans are obtained at $\sim$4 Hz (i.e., speed is $\sim$4 times displacement).
    \item[3] ``None'' means that no motion prior is used during scan matching. ``Max. Acc.'' employs a very mild prior that limits the maximum distance the vehicle could have traveled based a maximum acceleration of 8 m/s$^2$.
    % \item[1] 4 Hz, IS = Iceland, prior, non stationary
\end{tablenotes}
\end{threeparttable}
\vspace{-0.25em}
\end{center}
\end{table}

For keypoint extraction, we use $\ell_{\text{max}} = 1000$. The six experimental datasets examined in this paper are described in Table \ref{tab:diverse_environments}. As explained in the footnotes, the final column indicates whether a motion prior is used for RO. For three datasets, no motion prior is used at all. The other three employ a very mild motion prior that uses the system's maximum possible acceleration of 8 m/s$^2$ to remove impossible matches from consideration. Empirically, our system very rarely requires this prior (e.g., 2 scan matches out of 10,000). Incorrect alignments occur when the scene structure contains high levels of symmetry, resulting in many possible alignment solutions. The lenient prior serves to reduce the number of possible solutions. Due to space constraints, odometry plots for this work can be found in our video \cite{icravid}.

\begin{table}
\begin{center}
\caption{Comparative summary of odometry methods}\label{tab:compare_odom}
\begin{threeparttable}
\begin{tabularx}{\columnwidth}{@{\extracolsep{\fill}}@{ } l c c c c}
\toprule
\vspace{-0.5pt}
 & \multicolumn{2}{c}{Translation (m)\tnote{1}}  
 & \multicolumn{2}{c}{Rotation (deg)\tnote{1}} \\
 \cmidrule(lr){2-3} \cmidrule(l){4-5}
  Comparison & Median & Std. Dev. & Median & Std. Dev. \\
 \midrule
 RO to GPS &  {0.0520} & 0.0660 & {0.0929} & 0.1632 
 \\
 RO to VO  & 0.0724 & 0.0777 & 0.1414 & 0.2105 \\
 VO to GPS\tnote{2} & 0.0577 & {0.0546} & 0.1032 & {0.1396}\\
\bottomrule
\end{tabularx}
\begin{tablenotes}[flushleft]\footnotesize
    \item[1] Statistics for difference between estimated displacement (in translation and rotation) between consecutive radar scans, which are obtained at $\sim$4 Hz. 
    \item[2] Contextualizes RO performance by     comparing two well accepted methods. 
\end{tablenotes}
\end{threeparttable}
\vspace{-0.5em}
\end{center}
\end{table}

Fig. \ref{fig:odom_bar} evaluates RO performance against GPS and compares it to state-of-the-art VO \cite{WinstonChurchill} and popular scan matching method ICP. For ICP, we use a convergence tolerance of $1$e-$5$. We improve ICP performance by providing it with a good motion prior and limiting the possible matches to nearest neighbors within 2 m. Even so, RO outperforms ICP in every setting, as shown in Fig. \ref{fig:odom_bar}(a). The occasional lower rotational error of ICP in Fig. \ref{fig:odom_bar}(b) is misleading because it only occurs when ICP's estimated translation is highly inaccurate, in which case the rotation error is meaningless. Relative to VO, RO generally achieves comparable though slightly lower accuracy, which can be explained by the fact that RO only captures 3-DOF motion compared to VO's 6-DOF. Thus, if the platform experiences rolling or pitching, RO registers this motion into its 3-DOF estimate. Similarly, if the environment is less structured, variability in elevation is discarded in radar's 2D scan, also affecting the resulting motion estimates. This reasoning is consistent with RO's slightly higher error in countryside, forest, and boulder field (i.e., rocks) datasets. RO greatly outperforms VO on the highway, on which the average travel speed is highest, making radar odometry appealing from a safety perspective. 

A summary of RO performance relative to GPS is given in Table \ref{tab:ro_summary}, and a comparison of the three odometry methods using results averaged across all datasets is given in Table \ref{tab:compare_odom}. Table \ref{tab:ro_summary} shows that our RO pipeline achieves high accuracy, differing from GPS by only 5.2 cm and 0.09 deg in translation and rotation, respectively, on average. Notably, the third row of Table \ref{tab:compare_odom} shows that the overall median difference between VO and GPS exceeds that between RO and GPS. On the other hand, the standard deviation of differences is larger for RO, meaning that RO provides highly precise yet slightly noisier motion estimates compared to VO. Importantly, as detailed in Table \ref{tab:ro_summary}, RO is remarkably accurate in common settings (e.g., city center). Although the error seemingly increases for more complex environments (e.g., in Iceland), these discrepancies may be due to factors discussed in the previous paragraph. 

This work achieves lower error in Oxford city compared to our previous work \cite{cen2018mmwradar}. Though not shown, it consistently outperforms \cite{cen2018mmwradar}, which cannot handle unstructured settings.

\begin{table}
\begin{center}
\caption{Summary of RO scan matching performance relative to GPS}\label{tab:ro_summary}
\begin{threeparttable}
\begin{tabularx}{\columnwidth}{@{\extracolsep{\fill}}@{ } l c c c c}
\toprule
 & \multicolumn{2}{c}{Translation (m)}  
 & \multicolumn{2}{c}{Rotation (deg)} \\
 \cmidrule(lr){2-3} \cmidrule(l){4-5}
  Setting & Median & Std. Dev. & Median & Std. Dev. \\
 \midrule
 City &  0.0208 & 0.0318  & 0.0597 & 0.1442 
 \\
 Backstreets & 0.0362 & 0.0398 & 0.1375 & 0.2238 \\
 Highway & 0.0480 & 0.0598 & 0.0384 & 0.0673\\
 Countryside & 0.0462 & 0.0698 & 0.0811 & 0.1380\\
 Forest & 0.0711 & 0.0899 & 0.1141 & 0.1796\\
 Rocks & 0.0897 & 0.1050 & 0.1267 & 0.2267\\
\bottomrule
\end{tabularx}
\end{threeparttable}
\vspace{-0.2em}
\end{center}
\end{table}

\section{Conclusion and Future Work} \label{sec:conc}

In this paper, we motivate the use of radar as an information-rich, long-range, and reliable sensor that is robust under challenging conditions. We propose keypoint extraction and scan matching algorithms that are designed to require minimal \textit{a priori} knowledge and human intervention. We demonstrate our pipeline's high accuracy across diverse settings, performing comparably and at times better than VO. We  discuss our approach's theoretical underpinnings, propose two performance measures, and analyze system complexity. Three areas of promising future work are: streamlining unary candidate proposal, addressing scene symmetry (e.g., using tertiary graph matching or quantifying motion observability), and deducing 3D information from 2D scans. 

% REFERENCES -----------------------
\bibliographystyle{IEEEtran}
\bibliography{IEEEabrv,ref.bib}

\end{document}